\newtheorem{assumption}{Assumption}
\DeclareMathOperator*{\argmin}{arg\,min}
\newcommand{\etal}{\textit{et al}.}
\begin{document}
\title{A Proximal Algorithm for Network Slimming\thanks{The work was partially supported by NSF grants DMS-1854434, DMS-1952644, DMS-2151235, and a Qualcomm Faculty Award. }}
%
%
\author{Kevin Bui\inst{1} \and
Fanghui Xue\inst{1} \and
Fredrick Park\inst{2}\and Yingyong Qi\inst{1} \and Jack Xin\inst{1}}
\authorrunning{Bui et al.}
%
\institute{University of California, Irvine, CA 92697, USA\\
\email{\{kevinb3, fanghuix, yqi, jack.xin\}}@uci.edu\and
Whittier College, Whittier, CA, 90602, USA \\
\email{fpark1}@whittier.edu}
\maketitle              
\begin{abstract}
As a popular channel pruning method for convolutional neural networks (CNNs), network slimming (NS) has a three-stage process: (1) it trains a CNN with $\ell_1$ regularization applied to the scaling factors of the batch normalization layers; (2) it removes channels whose scaling factors are below a chosen threshold; and (3) it retrains the pruned model to recover the original accuracy. This time-consuming, three-step process is a result of using subgradient descent to train CNNs. Because subgradient descent does not exactly train CNNs towards  sparse, accurate structures, the latter two steps are necessary. Moreover, subgradient descent does not have any convergence guarantee. Therefore, we develop an alternative algorithm called proximal NS. Our proposed algorithm trains CNNs towards sparse, accurate structures, so identifying a scaling factor threshold is unnecessary and fine tuning the pruned CNNs is optional. Using Kurdyka-Łojasiewicz assumptions, we establish global convergence of proximal NS. Lastly, we validate the efficacy of the proposed algorithm on VGGNet, DenseNet and ResNet on CIFAR 10/100. Our experiments demonstrate that after one round of training, proximal NS yields a CNN with competitive accuracy and compression. 
\keywords{channel pruning \and nonconvex optimization \and convolutional neural networks \and neural network compression.}
\end{abstract}

\section{Introduction}
In the past decade, convolutional neural networks (CNNs) have revolutionized computer vision in various applications, such as image classification \cite{he2016deep,simonyan2014very,zagoruyko2016wide} and object detection \cite{girshick2014rich,huang2017speed,ren2015faster}. CNNs are able to internally generate diverse, various features through its multiple hidden layers, totaling millions of weight parameters to train and billions of floating point operations (FLOPs) to execute. Consequently, highly accurate CNNs are impractical to store and implement on resource-constrained devices, such as mobile smartphones. 

To compress CNNs into lightweight models, several directions, including weight pruning \cite{aghasi2017net,han2015learning}, have been investigated. Channel pruning \cite{liu2017learning,wen2016learning} is currently a popular direction because it can significantly reduce the number of weights needed in a CNN by removing any redundant channels. One straightforward approach to channel pruning is network slimming (NS) \cite{liu2017learning}, which appends an $\ell_1$ norm on the scaling factors of the batch normalization layers to the loss function being optimized. Being a sparse regularizer, the $\ell_1$ norm pushes the scaling factors corresponding to the channels towards zeroes. The original optimization algorithm used for NS is subgradient descent \cite{shor2012minimization}, but it has theoretical and practical issues. Subgradient descent does not necessarily decrease the loss function value after each iteration, even when performed exactly with full batch of data \cite{beck2017first}. Moreover, unless with some additional modifications, such as backtracking line search, subgradient descent may not converge to a critical point \cite{noll2014convergence}. When implemented in practice, barely any of the scaling factors have values exactly at zeroes by the end of training, resulting in two issues. First, a threshold value needs to be determined in order to remove channels whose scaling factors are below it. Second, pruning channels with nonzero scaling factors can deteriorate the CNNs' accuracy since these channels are still relevant to the CNN computation. As a result, the pruned CNN needs to be retrained to recover its original accuracy. Therefore, as a suboptimal algorithm, subgradient descent leads to a time-consuming, three-step process.

In this paper, we design an alternative optimization algorithm based on proximal alternating linearized minimization (PALM) \cite{bolte2014proximal} for NS. The algorithm has more theoretical and practical advantages than subgradient descent. Under certain conditions, the proposed algorithm does converge to a critical point. When used in practice, the proposed algorithm enforces the scaling factors of insignificant channels to be exactly zero by the end of training. Hence, there is no need to set a scaling factor threshold to identify which channels to remove. Because the proposed algorithm trains a model towards a truly sparse structure, the model accuracy is preserved after the insignificant channels are pruned, so fine tuning is unnecessary. The only trade-off of the proposed algorithm is a slight decrease in accuracy compared to the original baseline model. Overall, the new algorithm reduces the original three-step process of NS to only one round of training with fine tuning as an optional step, thereby saving the time and hassle of obtaining a compressed, accurate CNN.

\section{Related Works}
Early pruning methods focus on removing redundant weight parameters in CNNs. Han \etal \cite{han2015learning} proposed to remove weights if their magnitudes are below a certain threshold. Aghasi \etal \cite{aghasi2020fast} formulated a convex optimization problem to determine which weight parameters to retain while preserving model accuracy. Creating irregular sparsity patterns, weight pruning is not implementation friendly since it requires special software and hardware to accelerate inference \cite{Li_2020_CVPR,zhao2019variational}.

An alternative to weight pruning is pruning group-wise structures in CNNs. Many works \cite{alvarez2016learning,bui2021structured,li2016pruning,meng2020pruning,scardapane2017group,wen2016learning} have imposed group regularization onto various CNN structures, such as filters and channels. Li \etal \cite{Li_2020_CVPR} incorporated a sparsity-inducing matrix corresponding to each feature map and imposed row-wise and column-wise group regularization onto this matrix to determine which filters to remove. Lin \etal \cite{lin2020hrank} pruned filters that generate low-rank feature maps. Hu \etal \cite{hu2016network} devised network trimming that iteratively removes zero-activation neurons from the CNN and retrains the compressed CNN. Rather than regularizing the weight parameters, Liu \etal \cite{liu2017learning} developed NS, where they applied $\ell_1$ regularization on the scaling factors in the batch normalization layers in a CNN to determine which of their corresponding channels are redundant to remove and then they retrained the pruned CNN to restore its accuracy. Bui \etal \cite{bui2020nonconvex,bui2021improving} investigated nonconvex regularizers as alternatives to the $\ell_1$ regularizer for NS. On the other hand, Zhao \etal \cite{zhao2019variational} applied probabilistic learning onto the scaling factors to identify which redundant channels to prune with minimal accuracy loss, making retraining unnecessary. Lin \etal \cite{lin2019towards} introduced an external soft mask as a set of parameters corresponding to the CNN structures (e.g., filters and channels) and regularized the mask by adversarial learning.

\section{Proposed Algorithm}
In this section, we develop a novel PALM algorithm \cite{bolte2014proximal} for NS that consists of two straightforward, general steps per epoch: stochastic gradient descent on the weight parameters, including the scaling factors of the batch normalization layers, and soft thresholding on the scaling factors.
\subsection{Batch Normalization Layer}
Most modern CNNs have batch normalization (BN) layers \cite{ioffe2015batch} because these layers speed up their convergence and improve their generalization \cite{santurkar2018does}. These benefits are due to normalizing the output feature maps of the preceding convolutional layers using mini-batch statistics. Let $z \in \mathbb{R}^{B \times C \times H \times W}$ denote an output feature map, where $B$ is the mini-batch size, $C$ is the number of channels, and $H$ and $W$ are the height and width of the feature map, respectively. For each channel $i=1, \ldots, C$, the output of a BN layer on each channel $z_i$ is given by
\begin{align}
    z_i' = \gamma_i \frac{z_i-\mu_B}{\sqrt{\sigma^2_B + \epsilon}} + \beta_i,
\end{align}
where $\mu_B$ and $\sigma_B$ are the mean and standard deviation of the inputs across the mini-batch $B$, $\epsilon$ is a small constant for numerical stability, and $\gamma_i$ and $\beta_i$ are trainable weight parameters that help restore the representative power of the input $z_{i}$. The weight parameter $\gamma_i$ is defined to be the scaling factor of channel $i$. The scaling factor $\gamma_i$ determines how important channel $i$ is to the CNN computation as it is multiplied to all pixels of the same channel $i$ within the feature map $z$.
\subsection{Numerical Optimization}
Let $\{(x_i,y_i)\}_{i=1}^N$ be a given dataset, where each $x_i$ is a training input and $y_i$ is its corresponding label or value. Using the dataset $\{(x_i,y_i)\}_{i=1}^N$, we train a CNN with $c$ total channels, where each of their convolutional layers is followed by a BN layer. Let $\gamma \in \mathbb{R}^c$ be the vector of trainable scaling factors of the CNN, where for $i=1,\ldots, c$, each entry $\gamma_i$ is a scaling factor of channel $i$. Moreover, let $W \in \mathbb{R}^n$ be a vector of all $n$ trainable weight parameters, excluding the scaling factors, in the CNN. NS \cite{liu2017learning} minimizes the following objective function:
\begin{align}\label{eq:network_slim}
    \min_{W, \gamma} \frac{1}{N} \sum_{i=1}^N \mathcal{L}(h(x_i,W, \gamma), y_i) + \lambda \|\gamma\|_1,
\end{align}
where $h(x_i, W, \gamma)$  is the output of the CNN predicted on the data point $x_i$;\\ $\mathcal{L}(h(x_i,W, \gamma), y_i)$ is the loss function between the prediction $h(x_i, W, \gamma)$  and ground truth $y_i$, such as the cross-entropy loss function;  and $\lambda > 0$ is the regularization parameter for the $\ell_1$ penalty on the scaling factor vector $\gamma$. In \cite{liu2017learning}, \eqref{eq:network_slim} is solved by a gradient descent scheme with step size $\delta^t$ for each epoch $t$: 
\begin{subequations}
\begin{align}
    W^{t+1} &= W^t - \delta^t \nabla_W \tilde{\mathcal{L}}(W^t, \gamma^t),\\
    \gamma^{t+1} &= \gamma^t - \delta^t \left( \nabla_{\gamma}\tilde{\mathcal{L}}(W^t, \gamma^t) + \lambda \partial \|\gamma^t\|_1 \right),
\end{align} \label{eq:subgradient_step}
\end{subequations}
where $\tilde{\mathcal{L}}(W, \gamma) \coloneqq  \frac{1}{N} \sum_{i=1}^N \mathcal{L}(h(x_i,W, \gamma), y_i)$ and $\partial\|\cdot\|_1$
is the subgradient of the $\ell_1$ norm. 

By \eqref{eq:subgradient_step}, we observe that $\gamma$ is optimized by subgradient descent, which can lead to practical issues. When $\gamma_i = 0$ for some channel $i$, the subgradient needs to be chosen precisely. Not all subgradient vectors at a non-differentiable point decrease the value of \eqref{eq:network_slim} in each epoch \cite{beck2017first}, so we need to find one that does among the infinite number of choices.  In the numerical implementation of NS~\footnote{\url{https://github.com/Eric-mingjie/network-slimming}}, the subgradient $\zeta^t$ is selected such that $\zeta_i^t = 0$ by default when $\gamma_i^t = 0,$ but such selection is not verified to decrease the value of \eqref{eq:network_slim} in each epoch $t$. Lastly, subgradient descent only pushes the scaling factors of irrelevant channels to be near zero in value but not exactly zero. For this reason, when pruning a CNN, the user needs to determine the appropriate scaling factor threshold to remove its channels where no layers have zero channels and then fine tune it to restore its original accuracy. However, if too many channels are pruned that the fine-tuned accuracy is significantly less than the original, the user may waste time and resources by iterating the process of decreasing the threshold and fine tuning until the CNN attains acceptable accuracy and compression.

To develop an alternative algorithm that does not possess the practical issues of subgradient descent, we reformulate \eqref{eq:network_slim} as a constrained optimization problem by introducing an auxiliary variable $\xi$, giving us
\begin{equation}\label{eq:constrained_network_slim}
\begin{aligned}
\min_{W, \gamma, \xi} \quad  \tilde{\mathcal{L}}(W, \gamma) + \lambda \|\xi\|_1 \quad \textrm{s.t.} \quad  \xi = \gamma.
\end{aligned}
\end{equation}
However, we relax the constraint by a quadratic penalty with parameter $\beta > 0$, leading to a new unconstrained optimization problem:
\begin{align}\label{eq:new_network_slim}
    \min_{W, \gamma, \xi} \quad & \tilde{\mathcal{L}}(W, \gamma) + \lambda \|\xi\|_1 + \frac{\beta}{2} \|\gamma - \xi\|_2^2.
\end{align}
In \eqref{eq:network_slim}, the scaling factor vector $\gamma$ is optimized for both model accuracy and sparsity, which can be difficult to balance when training a CNN. However, in \eqref{eq:new_network_slim}, $\gamma$ is optimized for only model accuracy because it is a variable of the overall loss function $\tilde{\mathcal{L}}(W, \gamma)$ while $\xi$ is optimized only for sparsity because it is penalized by the $\ell_1$ norm. The quadratic penalty enforces $\gamma$ and $\xi$ to be similar in values, thereby ensuring $\gamma$ to be sparse.

Let $(W, \gamma)$ be a concatenated vector of $W$ and $\gamma$. We minimize \eqref{eq:new_network_slim} via alternating minimization, so for each epoch $t$, we solve the following subproblems:
\begin{subequations}
\begin{align} \label{eq:w_gamma_subprob}
    (W^{t+1}, \gamma^{t+1}) &\in \argmin_{W, \gamma} \tilde{\mathcal{L}}(W, \gamma) + \frac{\beta}{2} \|\gamma - \xi^t\|_2^2 \\ \label{eq:xi_subprob}
    \xi^{t+1} &\in \argmin_{\xi} \lambda \|\xi\|_1 + \frac{\beta}{2} \|\gamma^{t+1} - \xi\|_2^2.
\end{align}
\end{subequations}
Below, we describe how to solve each subproblem in details.
\subsubsection{$(W,\gamma)$-subproblem}
The $(W, \gamma)$-subproblem given in \eqref{eq:w_gamma_subprob} cannot be solved in closed form because the loss function $\tilde{\mathcal{L}}(W, \gamma)$ is a composition of several nonlinear functions. Typically, when training a CNN, this subproblem would be solved by (stochastic) gradient descent. To formulate \eqref{eq:w_gamma_subprob} as a gradient descent step, we follow a prox-linear strategy as follows:
\begin{align} \label{eq:w_gamma_min_prob}
\begin{split}
    &(W^{t+1}, \gamma^{t+1}) \in \argmin_{W, \gamma} \tilde{\mathcal{L}}(W^t, \gamma^t)+ \langle \nabla_W \tilde{\mathcal{L}}(W^t, \gamma^t), W - W^t\rangle\\ &+ \langle \nabla_{\gamma} \tilde{\mathcal{L}}(W^t, \gamma^t), \gamma - \gamma^t\rangle +\frac{\alpha}{2} \|W - W^{t}\|_2^2+\frac{\alpha}{2} \|\gamma - \gamma^{t}\|_2^2+ \frac{\beta}{2} \|\gamma - \xi^{t}\|_2^2,
    \end{split}
\end{align}
where $\alpha > 0$. By differentiating with respect to each variable, setting the partial derivative equal to zero, and solving for the variable, we have
\begin{subequations}
\begin{align} \label{eq:w_update}
    W^{t+1} &= W^t - \frac{1}{\alpha} \nabla_W \tilde{\mathcal{L}}(W^t, \gamma^t) \\ \label{eq:gamma_update}
    \gamma^{t+1} &= \frac{\alpha \gamma^t +\beta \xi^t}{\alpha+\beta} - \frac{1}{\alpha+\beta} \nabla_{\gamma} \tilde{\mathcal{L}}(W^t, \gamma^t).
\end{align}
\end{subequations}
We see that \eqref{eq:w_update} is gradient descent on $W^{t}$ with step size $\frac{1}{\alpha}$ while \eqref{eq:gamma_update} is gradient descent on a weighted average of $\gamma^t$ and $\xi^t$ with step size $\frac{1}{\alpha+\beta}$. These steps are straightforward to implement in practice when training a CNN because the gradient $(\nabla_W \tilde{\mathcal{L}}(W^t, \gamma^t),  \nabla_{\gamma} \tilde{\mathcal{L}}(W^t, \gamma^t))$ can be approximated by backpropagation. 
\subsubsection{$\xi$-subproblem}
To solve \eqref{eq:xi_subprob}, we perform a proximal update by minimizing the following subproblem:
\begin{align}\label{eq:xi_update}
    \xi^{t+1} &\in \argmin_{\xi} \lambda \|\xi\|_1  + \frac{\alpha}{2} \|\xi - \xi^{t}\|_2^2 + \frac{\beta}{2} \|\gamma^{t+1} - \xi\|_2^2.
\end{align}
Expanding it gives 
\begin{align*}
    \xi^{t+1}&=\argmin_{\xi} \|\xi\|_1 + \frac{1}{2 \left( \frac{\lambda}{\beta+\alpha} \right)} \left\| \xi - \frac{\alpha \xi^t + \beta \gamma^{t+1}}{\alpha + \beta} \right\|_2^2 =\mathcal{S}\left(\frac{\alpha \xi^t + \beta \gamma^{t+1}}{\alpha + \beta}, \frac{\lambda}{\beta+\alpha} \right),
\end{align*}
where $\mathcal{S}(x, \lambda)$ is the soft-thresholding operator defined by\\ $(\mathcal{S}(x, \lambda))_i  = \text{sign}(x_i) \max\{0, |x_i| - \lambda\}$
for each entry $i$. Therefore, $\xi$ is updated by performing soft thresholding on the weighted average between $\xi^t$ and $\gamma^{t+1}$. 

We summarize the new algorithm for NS in Algorithm \ref{alg:prox_network_slimming} as proximal NS.

 \begin{algorithm}[t!]
 \caption{Proximal NS: proximal algorithm for minimizing \eqref{eq:new_network_slim}}
 \scriptsize
 \label{alg:prox_network_slimming}
 \begin{algorithmic}[1]
 \renewcommand{\algorithmicrequire}{\textbf{Input:}}
 \renewcommand{\algorithmicensure}{\textbf{Output:}}
 \REQUIRE Regularization parameter $\lambda$, proximal parameter $\alpha$, penalty parameter $\beta$
 \\ Initialize $W^1, \xi^1$ with random values.\\
 Initialize $\gamma^1$ such that $\gamma_i = 0.5$ for each channel $i$. 
  \FOR {each epoch $t=1, \ldots, T$}
   \STATE $W^{t+1} = W^t -\frac{1}{\alpha} \nabla_W \tilde{\mathcal{L}}(W^t, \gamma^t)$  by stochastic gradient descent or variant.
   \STATE $\gamma^{t+1} = \frac{\alpha \gamma^t +\beta \xi^t}{\alpha+\beta} - \frac{1}{\alpha+\beta} \nabla_{\gamma} \tilde{\mathcal{L}}(W^t, \gamma^t)$ by stochastic gradient descent or variant.
   \STATE $\xi^{t+1} = \mathcal{S}\left(\frac{\alpha \xi^t + \beta \gamma^{t+1}}{\alpha + \beta}, \frac{\lambda}{\beta+\alpha} \right).$
  \ENDFOR
 \end{algorithmic} 
 \end{algorithm}
\section{Convergence Analysis}
To establish global convergence of proximal NS, we present relevant definitions and assumptions.

\begin{definition}[\cite{bolte2014proximal}]
A proper, lower-semicontinuous function $f: \mathbb{R}^m \rightarrow (-\infty, \infty]$ satisfies the Kurdyka-Łojasiewicz (KL) property at a point $\bar{x} \in \text{dom}(\partial f) \coloneqq \{x \in \mathbb{R}^m: \partial{f}(x) \neq \varnothing\}$ if there exist $\eta \in (0, +\infty]$, a neighborhood $U$ of $\bar{x}$, and a continuous concave function $\phi:[0, \eta) \rightarrow [0, \infty)$ with the following properties: (i) $\phi(0) = 0$; (ii) $\phi$ is continuously differentiable on $(0, \eta)$; (iii) $\phi'(x) > 0$ for all $x \in (0, \eta)$; and (iv) for any $x \in U$ with $f(\bar{x}) < f(x) < f(\bar{x})+\eta$, it holds that $\phi'(f(x) - f(\bar{x}))\text{dist}(0, \partial f (x)) \geq 1.$
If $f$ satisfies the KL property at every point $x \in \text{dom}(\partial f)$, then $f$ is called a KL function.  
\end{definition}
\begin{assumption} \label{assume:loss_function}
Suppose that
\begin{enumerate}[label=\alph*)]
    \item $\tilde{\mathcal{L}}(W, \gamma)$ is a proper, differentiable, and nonnegative function.
    \item $\nabla \tilde{\mathcal{L}}(W, \gamma)$ is Lipschitz continuous with constant $L$. 
    \item $\tilde{\mathcal{L}}(W, \gamma)$ is a KL function. 
\end{enumerate}
\end{assumption}
\begin{remark} Assumption \ref{assume:loss_function} (a)-(b) are common in nonconvex analysis (e.g., \cite{bolte2014proximal}). For Assumption \ref{assume:loss_function}, most commonly used loss functions for CNNs are verified to be KL functions \cite{zeng2019global}. Some CNN architectures do not satisfy Assumption \ref{assume:loss_function}(a) when they contain nonsmooth functions and operations, such as the ReLU activation functions and max poolings.  However, these functions and operations can be replaced with their smooth approximations. For example, the smooth approximation of ReLU is the softplus function $\frac{1}{c} \log(1+\exp(c x))$ for some parameter $c>0$ while the smooth approximation of the max function for max pooling is the softmax function $\sum_{i=1}^n \frac{x_i e^{c x_i}}{\sum_{i=1}^n e^{cx_i}}$ for some parameter $c >0$. Besides, Fu \etal \cite{fu2022exploring} made a similar assumption to establish convergence for their algorithm designed for weight and filter pruning.  Regardless, our numerical experiments demonstrate that our proposed algorithm still converges for CNNs containing ReLU activation functions and max pooling.
\end{remark}
For brevity, we denote
\begin{align*}
F(W, \gamma, \xi) \coloneqq \tilde{\mathcal{L}}(W, \gamma) + \lambda \|\xi\|_1 + \frac{\beta}{2} \|\gamma -\xi\|_2^2.
\end{align*}
Now, we are ready to present the main theorem:
\begin{theorem}\label{thm:global_conv}
Under Assumption \ref{assume:loss_function}, if $\{(W^t, \gamma^t, \xi^t)\}_{t=1}^{\infty}$ generated by Algorithm \ref{alg:prox_network_slimming} is bounded and we have $\alpha > L$, then $\{(W^t, \gamma^t, \xi^t)\}_{t=1}^{\infty}$ converges to a critical point $(W^*, \gamma^*, \xi^*)$ of $F$. 
\end{theorem}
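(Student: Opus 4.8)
The plan is to fit Algorithm~\ref{alg:prox_network_slimming} into the abstract ``inexact descent $+$ KL'' scheme of \cite{bolte2014proximal}: establish (i) a sufficient-decrease inequality, (ii) a relative-error (subgradient) bound, and (iii) continuity of $F$ along cluster points, and then invoke the generic finite-length argument. Throughout write $z^t=(W^t,\gamma^t,\xi^t)$, and use that each inner step is the \emph{exact} minimizer of a strongly convex surrogate: $(W^{t+1},\gamma^{t+1})$ minimizes the prox-linear model in \eqref{eq:w_gamma_min_prob} (equivalently, performs the closed-form updates \eqref{eq:w_update}--\eqref{eq:gamma_update}), while $\xi^{t+1}$ minimizes the objective in \eqref{eq:xi_update}.

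\emph{Step 1 (sufficient decrease).} Comparing the surrogate in \eqref{eq:w_gamma_min_prob} at $(W^{t+1},\gamma^{t+1})$ with its value at $(W^t,\gamma^t)$ and then applying the descent lemma for the $L$-Lipschitz gradient $\nabla\tilde{\mathcal{L}}$ (Assumption~\ref{assume:loss_function}(b)) yields $\tilde{\mathcal{L}}(W^{t+1},\gamma^{t+1})+\tfrac{\beta}{2}\|\gamma^{t+1}-\xi^t\|_2^2+\tfrac{\alpha-L}{2}\|(W^{t+1},\gamma^{t+1})-(W^t,\gamma^t)\|_2^2\le\tilde{\mathcal{L}}(W^t,\gamma^t)+\tfrac{\beta}{2}\|\gamma^t-\xi^t\|_2^2$. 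Doing the same with the $\xi$-update gives $\lambda\|\xi^{t+1}\|_1+\tfrac{\beta}{2}\|\gamma^{t+1}-\xi^{t+1}\|_2^2+\tfrac{\alpha}{2}\|\xi^{t+1}-\xi^t\|_2^2\le\lambda\|\xi^t\|_1+\tfrac{\beta}{2}\|\gamma^{t+1}-\xi^t\|_2^2$. Adding the two cancels the cross term $\tfrac{\beta}{2}\|\gamma^{t+1}-\xi^t\|_2^2$ and produces $F(z^{t+1})+\rho\,\|z^{t+1}-z^t\|_2^2\le F(z^t)$ with $\rho=\tfrac{\alpha-L}{2}>0$, where $\alpha>L$ is used. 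Since $F\ge0$ by Assumption~\ref{assume:loss_function}(a), $\{F(z^t)\}$ decreases to a finite $F^\star$, and summing the inequality gives $\sum_t\|z^{t+1}-z^t\|_2^2<\infty$, so $\|z^{t+1}-z^t\|_2\to0$.

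\emph{Step 2 (subgradient bound).} Writing the first-order optimality conditions of \eqref{eq:w_update}--\eqref{eq:gamma_update} and of \eqref{eq:xi_update} (for the latter, picking the element of $\lambda\partial\|\xi^{t+1}\|_1$ that the soft-thresholding step selects) expresses $\nabla_W\tilde{\mathcal{L}}(W^t,\gamma^t)$, $\nabla_\gamma\tilde{\mathcal{L}}(W^t,\gamma^t)$, and that $\ell_1$-subgradient in terms of the successive differences. Substituting these into the three blocks of $\partial F(z^{t+1})=\big(\nabla_W\tilde{\mathcal{L}}(W^{t+1},\gamma^{t+1}),\,\nabla_\gamma\tilde{\mathcal{L}}(W^{t+1},\gamma^{t+1})+\beta(\gamma^{t+1}-\xi^{t+1}),\,\lambda\partial\|\xi^{t+1}\|_1+\beta(\xi^{t+1}-\gamma^{t+1})\big)$, and bounding $\nabla\tilde{\mathcal{L}}(W^{t+1},\gamma^{t+1})-\nabla\tilde{\mathcal{L}}(W^t,\gamma^t)$ by $L\|z^{t+1}-z^t\|_2$, yields some $v^{t+1}\in\partial F(z^{t+1})$ with $\|v^{t+1}\|_2\le b\,\|z^{t+1}-z^t\|_2$ for a constant $b$ depending only on $L,\alpha,\beta$; by Step~1, $v^{t+1}\to0$.

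\emph{Steps 3--4 (cluster points are critical; KL gives convergence), and the main obstacle.} Boundedness of $\{z^t\}$ (a hypothesis) makes the cluster-point set $\omega$ nonempty and compact; continuity of $\tilde{\mathcal{L}}$, of $\|\cdot\|_1$, and of the quadratic term gives $F(z^{t_j})\to F(\bar z)$ along any subsequence $z^{t_j}\to\bar z$, so $F\equiv F^\star$ on $\omega$, and since $z^{t_j+1}\to\bar z$ with $v^{t_j+1}\in\partial F(z^{t_j+1})\to0$, closedness of the limiting subdifferential gives $0\in\partial F(\bar z)$. Finally, $\tilde{\mathcal{L}}$ is KL (Assumption~\ref{assume:loss_function}(c)) and $\lambda\|\cdot\|_1+\tfrac{\beta}{2}\|\cdot\|_2^2$ is semialgebraic, so $F$ is a KL function; combining the uniform KL inequality on a neighborhood of $\omega$ with Steps~1--2, the standard telescoping argument of \cite{bolte2014proximal} yields $\sum_t\|z^{t+1}-z^t\|_2<\infty$, hence $\{z^t\}$ is Cauchy and converges to a single $z^\star=(W^\star,\gamma^\star,\xi^\star)$, which is critical by Step~3. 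The two delicate points are: choosing the right potential and telescoping the two coupled quadratic penalties $\tfrac{\beta}{2}\|\gamma-\xi\|_2^2$ across the $(W,\gamma)$- and $\xi$-updates so Step~1 closes with $\rho>0$; and promoting the KL property from $\tilde{\mathcal{L}}$ to all of $F$, which strictly speaking requires the (smoothed) loss/architecture to be definable in an o-minimal structure, since the KL class is not closed under arbitrary sums. Everything else is routine.
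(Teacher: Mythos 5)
Your proposal follows essentially the same route as the paper: the same sufficient-decrease inequality obtained by combining the strongly convex prox-linear surrogate with the descent lemma and the exact $\xi$-minimization (with the $\tfrac{\beta}{2}\|\gamma^{t+1}-\xi^t\|_2^2$ cross term cancelling), the same relative-error bound from the first-order optimality conditions plus Lipschitz continuity of $\nabla\tilde{\mathcal{L}}$, and the same appeal to the abstract KL convergence theorem of \cite{bolte2014proximal} for the finite-length conclusion. Your closing remark that the KL property must be promoted from $\tilde{\mathcal{L}}$ to $F$ (e.g., via definability in an o-minimal structure, since the KL class is not closed under arbitrary sums) is a legitimate subtlety that the paper leaves implicit, but it does not alter the argument.
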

The proof is delayed to the appendix. It requires satisfying the sufficient decrease property in $F$ and the relative error property of $\partial{F}$  \cite{bolte2014proximal}.
\section{Numerical Experiments}
We evaluate proximal NS on VGG-19 \cite{simonyan2014very}, DenseNet-40 \cite{huang2017densely,huang2019convolutional}, and ResNet-110/164 \cite{he2016deep} trained on CIFAR 10/100 \cite{krizhevsky2009learning}. The CIFAR 10/100 dataset \cite{krizhevsky2009learning} consists of 60,000 natural images of resolution $32 \times 32$ with 10/100 categories. The dataset is split into two sets: 50,000 training images and 10,000 test images. As done in recent works \cite{he2016deep,liu2017learning}, standard augmentation techniques (e.g., shifting, mirroring, and normalization) are applied to the images before training and testing. The code for proximal NS is available at \url{https://github.com/kbui1993/Official-Proximal-Network-Slimming}.

\subsection{Implementation Details}
For CIFAR 10/100, the implementation is mostly the same as in \cite{liu2017learning}. Specifically, we train the networks from scratch for 160 epochs using stochastic gradient descent with initial learning rate at 0.1 that reduces by a factor of 10 at the 80th and 120th epochs. Moreover, the models are trained with weight decay $10^{-4}$ and Nesterov momentum of 0.9 without damping. The training batch size is 64. However, the parameter $\lambda$ is set differently. In our numerical experiments, using Algorithm \ref{alg:prox_network_slimming}, we set $\xi \sim \text{Unif}[0.47,0.50]$ for all networks while $\lambda =0.0045$ and $\beta = 100$ for VGG-19, $\lambda = 0.004$ and $\beta = 100$ for DenseNet-40, and $\lambda = 0.002$ and $\beta = 1.0, 0.25$ for ResNet-110 and ResNet-164, respectively. We have initially $\alpha = 10$, the reciprocal of the learning rate, and it changes accordingly to the learning rate schedule. A model is trained  five times on NVIDIA GeForce RTX 2080 for each network and dataset to obtain the average statistics.

 \begin{table}[t!]
\caption{The average number of scaling factors equal to zero at the end of training. Each architecture is trained five times per dataset. }
\label{tab:zero_scaling_factor}
    \centering
    \begin{tabular}{|l||c||c||c|}
    \hline
     & & \multicolumn{1}{c||}{CIFAR 10} & \multicolumn{1}{c|}{CIFAR 100}  \\
        Architecture & \makecell{Total Channels/$\gamma_i$} & \makecell{Avg. Number\\ of $\gamma_i = 0$} &  \makecell{Avg. Number\\ of $\gamma_i = 0$}\\ \hline
        VGG-19 & 5504 & 4105.2 &  3057.0\\ \hline
        DenseNet-40 & 9360 & 6936.4 & 6071.6\\ \hline
        ResNet-164 & 12112 & 8765.4 &  7115.8  \\ \hline
    \end{tabular}
    \vspace{-5mm}
\end{table} 

\subsection{Results}
We apply proximal NS to train VGG-19, DenseNet-40, and ResNet-164 on CIFAR 10/100. According to Table \ref{tab:zero_scaling_factor}, proximal NS drives a significant number of scaling factors to be exactly zeroes for each trained CNN. In particular, for VGG-19 and DenseNet-40, at least 55\% of the scaling factors are zeroes while for ResNet-164, at least 58\% are zeroes. We can safely remove the channels with zero scaling factors because they are unnecessary for inference. Unlike the original NS~\cite{liu2017learning}, proximal NS does not require us to select a scaling factor threshold based on how many channels to remove and how much accuracy to sacrifice. 

\begin{table*}[t!!!!]
 \scriptsize
\caption{Results between the different NS methods on CIFAR 10/100. Average statistics are obtained by training the baseline architectures and original NS five times, while the results for variational NS are originally reported from \cite{zhao2019variational}.}
\begin{subtable}{\textwidth}
\caption{CIFAR 10}
\label{tab:cifar10_tab}
    \begin{tabular}{|l|c|c|c|c|c|c|}
    \hline
        Architecture & Method & \makecell{Avg. Training Time\\ per Epoch (s)\\
        Pre-Pruned/Fine Tuned} &\makecell{\%\\ Channels\\ Pruned} & \makecell{\%\\ Param.\\ Pruned} & \makecell{\%\\ FLOPS\\ Pruned} &\makecell{Test Accuracy (\%) \\
       Post Pruned/Fine Tuned} \\ \hline
        \multirow{3}{*}{VGG-19} & Baseline & 38.10/----&N/A & N/A & N/A & 93.83/---- \\ 
        ~ & Original NS \cite{liu2017learning} & 40.39/29.40 &74.00* & 90.22 & 54.67 & 10.00/93.81\\ 
        ~ & Proximal NS (ours) &42.71/30.39& 74.59 & \textbf{91.17} & \textbf{57.54} & 93.71/93.38 \\ \hline
        \multirow{3}{*}{DenseNet-40} & Baseline & 117.45/----& N/A & N/A & N/A & 94.25/---- \\ 
        ~ & Original NS \cite{liu2017learning} & 119.49/74.45 &74.01 & 67.13 & \textbf{60.46} & 41.46/93.94 \\
        ~ & VCP \cite{zhao2019variational} & Not Reported & 60.00 & 59.67 & 44.78 & 93.16/---- \\ 
         ~ & Proximal NS (ours) & 118.86/76.10 &74.11& \textbf{67.75}  & 57.35 & 93.58/93.64 \\\hline
        \multirow{3}{*}{ResNet-164} & Baseline & 146.41/---- &N/A & N/A & N/A & 94.75/---- \\ 
        ~ & Original NS \cite{liu2017learning} & 151.62/112.80&71.98& 52.95 & 59.27 & 16.61/93.21 \\ 
        ~ & VCP \cite{zhao2019variational} & Not Reported & 74.00 & 56.70 & 49.08 & 93.16/---- \\ 
        ~ & Proximal NS (ours) & 150.13/114.26 &72.37 & \textbf{65.84}& \textbf{63.54} & 93.19/93.41 \\ \hline
    \end{tabular}
    \end{subtable}\\
      \begin{subtable}
      {\textwidth}\caption{CIFAR 100}\label{tab:cifar100_tab}
\scriptsize
\begin{tabular}{|l|c|c|c|c|c|c|}
    \hline
        Architecture & Method & \makecell{Avg. Training Time\\ per Epoch (s)\\Pre-Pruned/Fine Tuned
        }  &\makecell{\%\\ Channels\\ Pruned} & \makecell{\%\\ Param.\\ Pruned} & \makecell{\%\\ FLOPS\\ Pruned} & \makecell{Test Accuracy (\%) \\
       Post Pruned/Fine Tuned} \\ \hline
        \multirow{3}{*}{VGG-19} & Baseline & 37.83& N/A & N/A & N/A & 72.73/---- \\ 
        ~ & Original NS \cite{liu2017learning} & 39.98/30.74 &55.00  & 78.53& 38.66 & 1.00/72.91 \\ 
        ~ & Proximal NS (ours) & 42.31/30.04 &55.54 & \textbf{79.62} & \textbf{41.17} & 72.81/72.70 \\ \hline
        \multirow{3}{*}{DenseNet-40} & Baseline & 117.17 &N/A & N/A & N/A & 74.55/---- \\ 
         ~ & Original NS \cite{liu2017learning} & 119.32/77.95 &65.01 & \textbf{59.29} & \textbf{52.61} & 25.96/74.50 \\ 
        ~ & VCP \cite{zhao2019variational} & Not Reported & 37.00 & 37.73 & 22.67 & 72.19/---- \\ 
          ~ & Proximal NS (ours) & 120.89/82.92  & 64.87& 59.15 & 45.00 & 73.70/73.98\\ \hline
        \multirow{3}{*}{ResNet-164} & Baseline & 145.37 &N/A & N/A & N/A & 76.79/---- \\ 
        ~ & Original NS \cite{liu2017learning} & 150.65/115.95& 59.00 & 26.66 & 45.17 & 2.39/76.68  \\
        ~ &VCP \cite{zhao2019variational} & Not Reported& 47.00 & 17.59 & 27.16 & 73.76/---- \\ 
        ~ & Proximal NS (ours) & 149.15/117.88 & 58.75 & \textbf{42.28} & \textbf{47.93} & 75.26/75.68\\ \hline
    \end{tabular}
        \scriptsize{\\$^*$ This is the maximum possible for all five networks to remain functional for inference.}
    \vspace{-5mm}
    \end{subtable}
\end{table*}
We compare proximal NS with the original NS \cite{liu2017learning} and variational CNN pruning (VCP) \cite{zhao2019variational}, a Bayesian version of NS. To evaluate the effect of regularization and pruning on accuracy, we include the baseline accuracy, where the architecture is trained without any regularization on the scaling factors. For completeness, the models trained with original NS and proximal NS are fine tuned with the same setting as the first time training but without $\ell_1$ regularization on the scaling factors. The results are reported in Tables \ref{tab:cifar10_tab}-\ref{tab:cifar100_tab}. 

After the first round of training, proximal NS outperforms both the original NS and VCP in test accuracy while reducing a significant amount of parameters and FLOPs. Because proximal NS trains a model towards a sparse structure, the model accuracy is less than the baseline accuracy by at most 1.56\% and it remains the same between before and after pruning, a property that the original NS does not have. Although VCP is designed to preserve test accuracy after pruning, it does not compress as well as proximal NS for all architectures. With about the same proportion of channels pruned as the original NS, proximal NS saves more FLOPs for both VGG-19 and ResNet-164 and generally more parameters for all networks.

To potentially improve test accuracy, the pruned models from the original and proximal NS are fine tuned. For proximal NS, test accuracy of the pruned models improve slightly by at most 0.42\% for DenseNet-40 and ResNet-164 while worsen for VGGNet-19. Moreover, proximal NS is outperformed by the original NS in fine-tuned test accuracy for all models trained on CIFAR 100.

A more accurate model from original NS might be preferable. However, the additional fine tuning step requires a few more training hours to obtain an accuracy that is up to 1.5\% higher than the accuracy of a pruned model trained once by proximal NS. For example, for ResNet-164 trained on CIFAR 100, proximal NS takes about 7 hours to attain an average accuracy of 75.26\% while the original NS requires about 12 hours to achieve 1.42\% higher accuracy. Therefore, the amount of time and resources spent training for an incremental improvement may not be worthwhile. 

Finally, we compare proximal NS with other pruning methods applied to Densenet-40 and ResNet-110 trained on CIFAR 10. The other pruning methods, which may require fine tuning, are L1 \cite{li2016pruning}, GAL \cite{lin2019towards}, and Hrank \cite{lin2020hrank}.  For DenseNet-40, proximal NS prunes the most parameters and the second most FLOPs while having comparable accuracy as the fine-tuned Hrank and post-pruned GAL-0.05. For ResNet-110, proximal NS has better compression than L1, GAL-0.5, and Hrank with its post-pruned accuracy better than GAL-0.5's fine-tuned accuracy and similar to L1's fine-tuned accuracy. Although GAL or Hrank might be advantageous to use to obtain a sparse, accurate CNN, they have additional requirements besides fine tuning. GAL \cite{lin2019towards} requires an accurate baseline model available for knowledge distillation. For Hrank \cite{lin2020hrank}, the compression ratio needs to be specified for each convolutional layer, thereby making hyperparameter tuning more complicated. 

Overall, proximal NS is a straightforward algorithm that yields a generally more compressed and accurate model than the other methods in one training round. Although its test accuracy after one round is slightly lower than the baseline accuracy, it is expected because of the sparsity--accuracy trade-off and being a prune-while-training algorithm (which automatically identifies the insignificant channels during training) as discussed in \cite{shen2022prune}. Lastly, the experiments show that fine tuning the compressed models trained by proximal NS marginally improves the test accuracy, which makes fine tuning wasteful.

\begin{table}[!t]
    \centering
    \caption{Comparison of Proximal NS with other pruning methods on CIFAR 10.}
    \label{tab:compare}
    \scriptsize
    \begin{tabular}{|c|c|c|c|}
    \hline
       Architecture & Method & \makecell{\% Param./FLOPs\\ Pruned} & \makecell{Test Accuracy (\%) \\
       Post Pruned/Fine Tuned}  \\ \hline
        \multirow{3}{*}{DenseNet-40} & Hrank \cite{lin2020hrank} & 53.80/61.00 &  ----/93.68 \\ 
        ~ & GAL-0.05 \cite{lin2019towards} & 56.70/54.70 & 93.53/94.50 \\
        ~ & Proximal NS (Ours) & 67.75/57.54 & 93.58/93.64 \\ \hline
        \multirow{4}{*}{ResNet-110} & L1 \cite{li2016pruning} & 32.60/38.70 &  ----/93.30 \\ 
        ~ & GAL-0.5\cite{lin2019towards} & 44.80/48.50 & 92.55/92.74 \\ 
        ~ & Hrank \cite{lin2020hrank} & 39.40/41.20 & ----/94.23 \\ 
        ~ & Proximal NS (Ours) & 50.70/48.54 & 93.25/93.27\\
        \hline
    \end{tabular}
    \vspace{-5mm}
\end{table}

\section{Conclusion}
We develop a channel pruning algorithm called proximal NS with global convergence guarantee. It trains a CNN towards a sparse, accurate structure, making fine tuning optional. In our experiments, proximal NS can effectively compress CNNs with accuracy slightly less than the baseline. Because fine tuning CNNs trained by proximal NS marginally improves test accuracy, we will investigate modifying the algorithm to attain significantly better fine-tuned accuracy. 

For future direction, we shall study proximal cooperative neural architecture search \cite{rarts2022,coop2021}
and include nonconvex, sparse regularizers, such as  $\ell_1 - \ell_2$ \cite{yin2015minimization} and transformed $\ell_1$ \cite{zhang2018minimization}.  
\appendix
\section{Appendix}
First, we introduce important definitions and lemmas from variational analysis.

\begin{definition}[\cite{rockafellar2009variational}]
Let $f:\mathbb{R}^{n} \rightarrow (-\infty, +\infty]$ be a proper and lower semicontinuouous function. 
\begin{enumerate}[label=(\alph*)]
    \item The Fr\'echet subdifferential of $f$ at the point $x \in \text{dom } f \coloneqq \{x \in \mathbb{R}^n: f(x) < \infty\}$ is the  set
    \begin{align*}
        \hat{\partial}f(x) = \left\{ v \in \mathbb{R}^{n^2}: \liminf_{y \neq x, y \rightarrow x} \frac{f(y)-f(x) - \langle v, y-x \rangle}{\|y-x\|} \geq 0 \right\}.
    \end{align*}
\item The limiting subdifferential of $f$ at the point $x \in \text{dom } f$ is the set
\begin{align*}
    \partial f(x) = \left\{ v \in \mathbb{R}^{n^2}: \exists \{(x^t,y^t)\}_{t=1}^{\infty} \text{ s.t. } x^t \rightarrow x, f(x^t) \rightarrow f(x), \hat{\partial}f(x^t) \ni y^t \rightarrow y \right\}.
\end{align*}
\end{enumerate}
\end{definition}
\begin{lemma}[Strong Convexity Lemma \cite{beck2017first}]\label{lemma:convex}
 A function $f(x)$ is called strongly convex with parameter $\mu$ if and only if one of the following conditions holds:
 \begin{enumerate}[label=\alph*)]
     \item $g(x) = f(x) - \frac{\mu}{2} \|x\|_2^2$ is convex.
     \item $f(y) \geq f(x) + \langle \nabla f(x), y-x \rangle + \frac{\mu}{2} \|y-x\|_2^2, \; \forall x,y$.
 \end{enumerate}
 \end{lemma}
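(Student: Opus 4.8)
The plan is to fix the underlying definition of strong convexity as the inequality
\[
f(\lambda x + (1-\lambda) y) \leq \lambda f(x) + (1-\lambda) f(y) - \frac{\mu}{2}\lambda(1-\lambda)\|x-y\|_2^2
\]
holding for all $x,y$ and all $\lambda \in [0,1]$, and then to prove the two equivalences (definition $\Leftrightarrow$ a) and (a $\Leftrightarrow$ b), which together show all three statements equivalent. The whole argument rests on reducing strong convexity of $f$ to ordinary convexity of the shifted function $g(x) = f(x) - \frac{\mu}{2}\|x\|_2^2$.

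For (definition $\Leftrightarrow$ a), I would write out the convexity inequality for $g$, namely $g(\lambda x + (1-\lambda)y) \leq \lambda g(x) + (1-\lambda) g(y)$, and substitute $g = f - \frac{\mu}{2}\|\cdot\|_2^2$. The key algebraic fact is the identity
\[
\lambda \|x\|_2^2 + (1-\lambda)\|y\|_2^2 - \|\lambda x + (1-\lambda) y\|_2^2 = \lambda(1-\lambda)\|x-y\|_2^2,
\]
verified by expanding both sides. Using it, the quadratic terms collapse exactly into the $-\frac{\mu}{2}\lambda(1-\lambda)\|x-y\|_2^2$ correction, so the convexity inequality for $g$ becomes literally the strong convexity inequality for $f$. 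Since every step is an equivalence, this direction is immediate and requires no differentiability.

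For (a $\Leftrightarrow$ b), I would invoke the first-order characterization of convexity for a differentiable function: $g$ is convex if and only if $g(y) \geq g(x) + \langle \nabla g(x), y-x\rangle$ for all $x,y$. Because $f$ is differentiable, $g$ is differentiable with $\nabla g(x) = \nabla f(x) - \mu x$, so I would substitute $g(y) = f(y) - \frac{\mu}{2}\|y\|_2^2$, $g(x) = f(x) - \frac{\mu}{2}\|x\|_2^2$, and this gradient into the tangent-plane inequality. The leftover quadratic terms are reorganized by the completing-the-square identity
\[
\frac{\mu}{2}\|y\|_2^2 - \frac{\mu}{2}\|x\|_2^2 - \mu \langle x, y-x\rangle = \frac{\mu}{2}\|y-x\|_2^2,
\]
after which the inequality is exactly statement (b). Combined with the previous paragraph, this yields the full chain of equivalences.

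The main obstacle is the first-order characterization of convexity itself, which is the only nontrivial ingredient: the direction ``tangent-plane inequality $\Rightarrow$ convex'' must be argued, typically by applying the inequality at the point $z = \lambda x + (1-\lambda)y$ against both $x$ and $y$ and taking the $\lambda$-weighted sum, and this is where differentiability of $f$ is genuinely used, in contrast to (definition $\Leftrightarrow$ a), which is purely algebraic. I would also flag the logical subtlety that (b) is meaningful only when $f$ is differentiable, so the three statements are equivalent under the standing differentiability assumption, whereas (definition $\Leftrightarrow$ a) holds in general.
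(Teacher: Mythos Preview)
Your proposal is mathematically correct and follows the standard textbook argument: reduce strong convexity of $f$ to ordinary convexity of $g(x)=f(x)-\frac{\mu}{2}\|x\|_2^2$, use the algebraic identity for the quadratic terms to get (definition $\Leftrightarrow$ a), and then invoke the first-order characterization of convexity for differentiable functions to get (a $\Leftrightarrow$ b). You also correctly flag that (b) requires differentiability while (a) does not.

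However, there is nothing to compare against: the paper does \emph{not} prove this lemma. It is stated with a citation to \cite{beck2017first} and treated as a known result from the literature, then used as a black box in the proof of the Sufficient Decrease lemma. So your proof is not an alternative route to the paper's argument; it is simply a (correct, standard) proof of a cited fact that the paper deliberately omits. If the goal is to match the paper, the appropriate ``proof'' here is just the reference.
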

 \begin{lemma}[Descent Lemma \cite{beck2017first}]\label{lemma:descent}
 If $\nabla f(x)$ is Lipschitz continuous with parameter $L >0$, then 
 \begin{align*}
 f(y) \leq f(x) + \langle \nabla f(x), y- x \rangle + \frac{L}{2} \|x-y\|_2^2,\; \forall x,y.
 \end{align*}
 \end{lemma}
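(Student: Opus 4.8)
The plan is to reduce this multivariate inequality to a one-dimensional integral along the line segment joining $x$ and $y$, and then to bound the resulting integrand using the Lipschitz hypothesis together with the Cauchy--Schwarz inequality. This is the classical route to the Descent Lemma, and the Lipschitz continuity of $\nabla f$ is exactly what makes the integral representation usable.

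First I would fix arbitrary $x, y \in \mathbb{R}^n$ and introduce the scalar auxiliary function $g:[0,1] \to \mathbb{R}$ defined by $g(t) = f(x + t(y-x))$, so that $g(0) = f(x)$ and $g(1) = f(y)$. Because $\nabla f$ is Lipschitz continuous it is in particular continuous, hence $g$ is continuously differentiable with $g'(t) = \langle \nabla f(x + t(y-x)), y - x \rangle$ by the chain rule. Applying the fundamental theorem of calculus then gives the representation $f(y) - f(x) = g(1) - g(0) = \int_0^1 \langle \nabla f(x + t(y-x)), y - x \rangle \, dt$.

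Next I would subtract $\langle \nabla f(x), y - x \rangle = \int_0^1 \langle \nabla f(x), y - x \rangle \, dt$ from both sides to isolate the gap, obtaining $f(y) - f(x) - \langle \nabla f(x), y - x \rangle = \int_0^1 \langle \nabla f(x + t(y-x)) - \nabla f(x), y - x \rangle \, dt$. I would bound the integrand by the Cauchy--Schwarz inequality and then invoke the Lipschitz estimate $\|\nabla f(x + t(y-x)) - \nabla f(x)\|_2 \le L\,\|t(y-x)\|_2 = Lt\,\|y - x\|_2$, which shows the integrand is at most $Lt\,\|y - x\|_2^2$. Integrating this bound, $\int_0^1 Lt\,\|y-x\|_2^2 \, dt = \frac{L}{2}\|y-x\|_2^2$, delivers the desired inequality.

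The argument has no deep difficulty once the integral representation is secured, so I do not anticipate a genuine obstacle; the point that most deserves care is the justification that $g$ is differentiable and that the fundamental theorem of calculus applies, which follows from the continuity of $\nabla f$ (implied by the Lipschitz hypothesis) along the compact segment $\{x + t(y-x): t \in [0,1]\}$. A secondary bookkeeping point is to make sure the Cauchy--Schwarz step and the Lipschitz bound are stated with the same Euclidean norm, so that the constant $L$ propagates cleanly and matches the norm appearing in the statement.
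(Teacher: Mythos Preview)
Your argument is correct and is the standard proof of the Descent Lemma. Note, however, that the paper does not supply its own proof of this lemma: it is stated as a known auxiliary result with a citation to \cite{beck2017first}, so there is no in-paper proof to compare against. Your integral-representation approach via the auxiliary function $g(t)=f(x+t(y-x))$, followed by Cauchy--Schwarz and the Lipschitz bound, is exactly the classical route found in that reference, and all the steps you outline (differentiability of $g$, the fundamental theorem of calculus, and the norm bookkeeping) go through without issue.
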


For brevity, denote $\tilde{W} \coloneqq (W, \gamma)$, the overall set of weights in a CNN, and $Z \coloneqq (\tilde{W}, \xi) = (W, \gamma, \xi)$. Before proving Theorem \ref{thm:global_conv}, we prove some necessary lemmas.
\begin{lemma}[Sufficient Decrease]\label{lemma:suff_decrease}
Let $\{Z^t\}_{t=1}^{\infty}$ be a sequence generated by Algorithm \ref{alg:prox_network_slimming}. Under Assumption \ref{assume:loss_function}, we have
\begin{align} 
F(Z^{t+1}) - F(Z^t) \leq \frac{L- \alpha}{2} \|Z^{t+1} - Z^t\|_2^2.
\end{align}
for all $t \in \mathbb{N}$. In addition, when $\alpha >L$, we have
\begin{align} \label{eq:finite_length}
     \sum_{t=1}^{\infty} \|Z^{t+1} - Z^t\|_2^2 < \infty.
\end{align} 
\end{lemma}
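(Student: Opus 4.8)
The plan is to split one iteration of Algorithm~\ref{alg:prox_network_slimming} into its two block updates --- the joint prox-linear step producing $(W^{t+1},\gamma^{t+1})$ via \eqref{eq:w_update}--\eqref{eq:gamma_update} and the exact proximal step producing $\xi^{t+1}$ via \eqref{eq:xi_update} --- to bound how much $F$ decreases across each block, and then add the two estimates. Writing $Z^{t}=(W^{t},\gamma^{t},\xi^{t})$, I would telescope
\begin{align*}
F(Z^{t+1})-F(Z^{t})=\big[F(W^{t+1},\gamma^{t+1},\xi^{t})-F(W^{t},\gamma^{t},\xi^{t})\big]+\big[F(W^{t+1},\gamma^{t+1},\xi^{t+1})-F(W^{t+1},\gamma^{t+1},\xi^{t})\big],
\end{align*}
noting that only $\tilde{\mathcal{L}}(W,\gamma)+\tfrac{\beta}{2}\|\gamma-\xi^{t}\|_2^2$ changes in the first bracket (the term $\lambda\|\xi^{t}\|_1$ is frozen) and only $q(\xi):=\lambda\|\xi\|_1+\tfrac{\beta}{2}\|\gamma^{t+1}-\xi\|_2^2$ changes in the second.

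For the $(W,\gamma)$-block, the key observation is that $(W^{t+1},\gamma^{t+1})$ is the exact global minimizer of the smooth surrogate $\psi(W,\gamma)$ in \eqref{eq:w_gamma_min_prob}, which is $\alpha$-strongly convex in the concatenated variable (its Hessian is $\mathrm{diag}(\alpha I,(\alpha+\beta)I)$). Applying Lemma~\ref{lemma:convex}(b) at the minimizer, where $\nabla\psi=0$, gives $\psi(W^{t},\gamma^{t})-\psi(W^{t+1},\gamma^{t+1})\ge\tfrac{\alpha}{2}\big(\|W^{t+1}-W^{t}\|_2^2+\|\gamma^{t+1}-\gamma^{t}\|_2^2\big)$. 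I would then expand both sides using the definition of $\psi$, replace the inner-product terms by invoking the Descent Lemma (Lemma~\ref{lemma:descent}) on $\tilde{\mathcal{L}}(W,\gamma)$ over the joint variable --- this is the only place Assumption~\ref{assume:loss_function}(b) is used --- and rearrange to reach
\begin{align*}
F(W^{t+1},\gamma^{t+1},\xi^{t})-F(W^{t},\gamma^{t},\xi^{t})\le\frac{L-2\alpha}{2}\big(\|W^{t+1}-W^{t}\|_2^2+\|\gamma^{t+1}-\gamma^{t}\|_2^2\big).
\end{align*}
For the $\xi$-block, $\xi^{t+1}$ minimizes $\phi(\xi):=q(\xi)+\tfrac{\alpha}{2}\|\xi-\xi^{t}\|_2^2$, which is $\alpha$-strongly convex; since $\phi(\xi^{t})=q(\xi^{t})$ and $\phi(\xi^{t+1})\ge q(\xi^{t+1})$, the strong-convexity inequality at $\xi^{t+1}$ (with $0$ in its subdifferential) yields $q(\xi^{t+1})-q(\xi^{t})\le-\tfrac{\alpha}{2}\|\xi^{t+1}-\xi^{t}\|_2^2$. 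Because $L-2\alpha\le L-\alpha$ and $-\alpha\le L-\alpha$ (as $L\ge0$), summing the two bracket bounds gives exactly $F(Z^{t+1})-F(Z^{t})\le\tfrac{L-\alpha}{2}\|Z^{t+1}-Z^{t}\|_2^2$.

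Finally, for \eqref{eq:finite_length}, when $\alpha>L$ the inequality just proved reads $\tfrac{\alpha-L}{2}\|Z^{t+1}-Z^{t}\|_2^2\le F(Z^{t})-F(Z^{t+1})$, so $\{F(Z^{t})\}$ is nonincreasing; it is bounded below by $0$ since each of the three terms defining $F$ is nonnegative (Assumption~\ref{assume:loss_function}(a)), hence it converges to some $F^{*}\ge0$. Summing from $t=1$ to $N$ telescopes the right side to $F(Z^{1})-F(Z^{N+1})\le F(Z^{1})-F^{*}$, and letting $N\to\infty$ gives $\sum_{t=1}^{\infty}\|Z^{t+1}-Z^{t}\|_2^2\le\tfrac{2}{\alpha-L}(F(Z^{1})-F^{*})<\infty$.

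I expect the main obstacle to be the bookkeeping in the $(W,\gamma)$-block: the two gradient-descent updates \eqref{eq:w_update} and \eqref{eq:gamma_update} must be recognized as a single joint minimization of $\psi$ so that the Descent Lemma can be applied to $\tilde{\mathcal{L}}$ in the concatenated variable, and one must keep the coupling penalty $\tfrac{\beta}{2}\|\gamma-\xi^{t}\|_2^2$ correctly attached through the $\gamma$-update and verify it reappears in $F(W^{t+1},\gamma^{t+1},\xi^{t})$ so that the two brackets in the telescoping identity really do account for all of $F$. Everything else is routine manipulation of strong convexity and the descent lemma, and notably neither boundedness of the iterates nor the KL property (Assumption~\ref{assume:loss_function}(c)) is needed for this lemma.
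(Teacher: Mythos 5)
Your proposal is correct and follows essentially the same route as the paper's proof: the same two-block telescoping of $F$, the same use of $\alpha$-strong convexity of the prox-linear surrogate at its minimizer combined with the Descent Lemma to get the $\frac{L-2\alpha}{2}$ bound on the $(W,\gamma)$-block, the same $-\frac{\alpha}{2}\|\xi^{t+1}-\xi^t\|_2^2$ bound on the $\xi$-block (the paper obtains it directly from minimality of $\xi^{t+1}$ in \eqref{eq:xi_update}, which is all you need rather than strong convexity), and the same telescoping sum with $F\geq 0$ for \eqref{eq:finite_length}. Your closing remarks — that the two coefficients must separately be compared to $\frac{L-\alpha}{2}$, and that neither boundedness nor the KL property is used here — are also accurate.
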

\begin{proof}
First we define the function
\begin{gather}
\begin{aligned}
    L_{t}(\tilde{W}) &= \tilde{\mathcal{L}}(\tilde{W}^t) + \langle \nabla \tilde{\mathcal{L}}(\tilde{W}^{t}), \tilde{W} - \tilde{W}^t \rangle+ \frac{\alpha}{2} \|\tilde{W}- \tilde{W}^t \|_2^2 + \frac{\beta}{2} \|\gamma - \xi^t \|_2^2. 
\end{aligned}
\end{gather}
We observe that $L_t$ is strongly convex with respect to $\tilde{W}$ with parameter $\alpha$. Because $\nabla L_t(\tilde{W}^{t+1}) = 0$ by \eqref{eq:w_gamma_min_prob}, we use Lemma \ref{lemma:convex} to obtain 
\begin{gather}
\begin{aligned}
    L_t(\tilde{W}^t) &\geq L_t(\tilde{W}^{t+1}) + \langle \nabla L_t(\tilde{W}^{t+1}), \tilde{W}^t - \tilde{W}^{t+1}\rangle + \frac{\alpha}{2} \|\tilde{W}^{t+1} - \tilde{W}^t \|_2^2\\
    & \geq L_t(\tilde{W}^{t+1}) + \frac{\alpha}{2} \|\tilde{W}^{t+1} -\tilde{W}^t\|_2^2,
\end{aligned}
\end{gather}
which simplifies to
\begin{gather}
\begin{aligned}
    \tilde{\mathcal{L}}(\tilde{W}^t) + \frac{\beta}{2} \|\gamma^t - \xi^t\|_2^2  - \alpha  \|\tilde{W}^{t+1} - \tilde{W}^t\|_2^2 \geq &\tilde{\mathcal{L}}(\tilde{W}^t) + \langle \nabla \tilde{\mathcal{L}}(\tilde{W}^t), \tilde{W}^{t+1} - \tilde{W}^t \rangle\\ &+ \frac{\beta}{2} \|\gamma^{t+1} - \xi^t\|_2^2.
\end{aligned}
\end{gather}
Since $\nabla \tilde{\mathcal{L}}(\tilde{W})$ is Lipschitz continuous with constant $L$, we have
\begin{align}
    \tilde{\mathcal{L}}(\tilde{W}^{t+1}) \leq \tilde{\mathcal{L}}(\tilde{W}^t) + \langle \nabla  \tilde{\mathcal{L}}(\tilde{W}^{t}), \tilde{W}^{t+1} - \tilde{W}^{t} \rangle + \frac{L}{2} \|\tilde{W}^{t+1} - \tilde{W}^{t} \|_2^2
\end{align}
by Lemma \ref{lemma:descent}. Combining the previous two inequalities gives us
\begin{align*}
    \tilde{\mathcal{L}}(\tilde{W}^t) + \frac{\beta}{2} \|\gamma^t - \xi^t\|_2^2  + \frac{L - 2\alpha}{2}  \|\tilde{W}^{t+1} - \tilde{W}^t\|_2^2 \geq \tilde{\mathcal{L}}(\tilde{W}^{t+1}) + \frac{\beta}{2} \|\gamma^{t+1} - \xi^t\|_2^2.
\end{align*}
Adding the term $\lambda \|\xi^t\|_1$ on both sides and rearranging the inequality give us 
\begin{align}\label{eq:first_sufficient_descent}
     F(\tilde{W}^{t+1}, \xi^t) - F(Z^t) &\leq \frac{L-2\alpha}{2} \|\tilde{W}^{t+1} - \tilde{W}^{t} \|_2^2
\end{align}

By \eqref{eq:xi_update}, we have
\begin{align*}
    \lambda \|\xi^{t+1} \|_1 + \frac{\beta}{2} \|\gamma^{t+1} - \xi^{t+1}\|_2^2 + \frac{\alpha}{2} \|\xi^{t+1} - \xi^t\|_2^2 \leq \lambda \|\xi^t\|_1 + \frac{\beta}{2} \|\gamma^{t+1} - \xi^t\|_2^2.
\end{align*}
Adding $\tilde{\mathcal{L}}(\tilde{W}^{t+1})$ on both sides and rearranging the inequality give
\begin{align}\label{eq:second_sufficient_decrease}
     F(Z^{t+1}) - F(\tilde{W}^{t+1}, \xi^t) &\leq - \frac{\alpha}{2} \|\xi^{t+1} - \xi^t\|_2^2
\end{align}
Summing up \eqref{eq:first_sufficient_descent} and \eqref{eq:second_sufficient_decrease} and rearranging them, we have
\begin{align} F(Z^{t+1}) - F(Z^t) \leq \frac{L-2\alpha}{2} \|\tilde{W}^{t+1}- \tilde{W}^{t} \|_2^2  - \frac{\alpha}{2} \|\xi^{t+1} - \xi^t\|_2^2 \leq \frac{L-\alpha}{2} \|Z^{t+1} - Z^t\|_2^2. 
\end{align}
Summing up the inequality for $t = 1, \ldots, N-1$, we have
\begin{align*}
    \sum_{t=1}^{N-1} \frac{\alpha-L}{2} \|Z^{t+1} - Z^t\|_2^2 \leq F(Z^1) - F(Z^N) \leq F(Z^1).
\end{align*}
Because $\alpha >L$, the left-hand side is nonnegative, so as $N \rightarrow \infty$, we have \eqref{eq:finite_length}.
\end{proof}
\begin{lemma}[Relative error property] \label{lemma:relative_error_prop}
Let $\{Z^t\}_{t=1}^{\infty}$ be a sequence generated by Algorithm \ref{alg:prox_network_slimming}. Under Assumption \ref{assume:loss_function}, for any $t \in \mathbb{N}$, there exists some $w^{t+1} \in \partial F(Z^{t+1})$ such that
\begin{align}
    \|w^{t+1}\|_2 \leq (3\alpha + 2L +\beta) \left\| Z^{t+1} -  Z^t\right\|_2.
\end{align}
\end{lemma}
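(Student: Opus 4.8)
The plan is to exhibit one concrete element $w^{t+1}=(w^{t+1}_W,w^{t+1}_\gamma,w^{t+1}_\xi)$ of the limiting subdifferential $\partial F(Z^{t+1})$ and bound each block using the first-order optimality conditions of the three subproblems together with the $L$-Lipschitz continuity of $\nabla\tilde{\mathcal{L}}$. Since $F$ is the sum of the $C^1$ function $\tilde{\mathcal{L}}(\tilde W)+\frac{\beta}{2}\|\gamma-\xi\|_2^2$ and the term $\lambda\|\xi\|_1$ that depends only on $\xi$, the standard sum/separability rule for limiting subdifferentials gives that $\partial F(Z^{t+1})$ consists of all vectors whose $W$-block is $\nabla_W\tilde{\mathcal{L}}(\tilde W^{t+1})$, whose $\gamma$-block is $\nabla_\gamma\tilde{\mathcal{L}}(\tilde W^{t+1})+\beta(\gamma^{t+1}-\xi^{t+1})$, and whose $\xi$-block is $\lambda s-\beta(\gamma^{t+1}-\xi^{t+1})$ for some $s\in\partial\|\xi^{t+1}\|_1$. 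It therefore suffices to pick $s$ appropriately and bound the three resulting blocks.

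For the $W$ and $\gamma$ blocks I use that $(W^{t+1},\gamma^{t+1})$ is the exact minimizer of the strongly convex prox-linear model \eqref{eq:w_gamma_min_prob}, whose stationarity yields $\nabla_W\tilde{\mathcal{L}}(\tilde W^t)+\alpha(W^{t+1}-W^t)=0$ and $\nabla_\gamma\tilde{\mathcal{L}}(\tilde W^t)+\alpha(\gamma^{t+1}-\gamma^t)+\beta(\gamma^{t+1}-\xi^t)=0$. Taking $w^{t+1}_W=\nabla_W\tilde{\mathcal{L}}(\tilde W^{t+1})$ and writing it as $[\nabla_W\tilde{\mathcal{L}}(\tilde W^{t+1})-\nabla_W\tilde{\mathcal{L}}(\tilde W^t)]-\alpha(W^{t+1}-W^t)$ bounds its norm by $L\|\tilde W^{t+1}-\tilde W^t\|_2+\alpha\|W^{t+1}-W^t\|_2$. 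For the $\gamma$ block, I substitute $\beta(\gamma^{t+1}-\xi^t)=-\nabla_\gamma\tilde{\mathcal{L}}(\tilde W^t)-\alpha(\gamma^{t+1}-\gamma^t)$ and split $\beta(\gamma^{t+1}-\xi^{t+1})=\beta(\gamma^{t+1}-\xi^t)+\beta(\xi^t-\xi^{t+1})$, so that $w^{t+1}_\gamma:=\nabla_\gamma\tilde{\mathcal{L}}(\tilde W^{t+1})+\beta(\gamma^{t+1}-\xi^{t+1})$ equals $[\nabla_\gamma\tilde{\mathcal{L}}(\tilde W^{t+1})-\nabla_\gamma\tilde{\mathcal{L}}(\tilde W^t)]-\alpha(\gamma^{t+1}-\gamma^t)+\beta(\xi^t-\xi^{t+1})$, hence $\|w^{t+1}_\gamma\|_2\le L\|\tilde W^{t+1}-\tilde W^t\|_2+\alpha\|\gamma^{t+1}-\gamma^t\|_2+\beta\|\xi^{t+1}-\xi^t\|_2$.

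For the $\xi$ block, the optimality condition of \eqref{eq:xi_update} gives some $s^{t+1}\in\partial\|\xi^{t+1}\|_1$ with $\lambda s^{t+1}+\alpha(\xi^{t+1}-\xi^t)+\beta(\xi^{t+1}-\gamma^{t+1})=0$, i.e. $\lambda s^{t+1}-\beta(\gamma^{t+1}-\xi^{t+1})=-\alpha(\xi^{t+1}-\xi^t)$; choosing this $s^{t+1}$ makes $w^{t+1}_\xi:=\lambda s^{t+1}-\beta(\gamma^{t+1}-\xi^{t+1})=-\alpha(\xi^{t+1}-\xi^t)$, so the coupling terms cancel and $\|w^{t+1}_\xi\|_2\le\alpha\|\xi^{t+1}-\xi^t\|_2$. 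Assembling $w^{t+1}\in\partial F(Z^{t+1})$, applying $\|w^{t+1}\|_2\le\|w^{t+1}_W\|_2+\|w^{t+1}_\gamma\|_2+\|w^{t+1}_\xi\|_2$, and bounding $\|W^{t+1}-W^t\|_2$, $\|\gamma^{t+1}-\gamma^t\|_2$, $\|\xi^{t+1}-\xi^t\|_2$, $\|\tilde W^{t+1}-\tilde W^t\|_2$ each by $\|Z^{t+1}-Z^t\|_2$, the coefficients collect to $(L+\alpha)+(L+\alpha+\beta)+\alpha=3\alpha+2L+\beta$, as claimed. I expect the main obstacle to be bookkeeping rather than conceptual: correctly invoking the separable-sum rule for $\partial F$, selecting the right $\ell_1$ subgradient from the $\xi$-update so the $\beta(\gamma^{t+1}-\xi^{t+1})$ terms cancel, and carefully tracking which gradient is evaluated at $\tilde W^t$ versus $\tilde W^{t+1}$ so that every stale term becomes either a Lipschitz gradient difference or a consecutive-iterate difference.
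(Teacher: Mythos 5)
Your proposal is correct and follows essentially the same route as the paper's proof: you construct the same element $w^{t+1}=(w_1^{t+1},w_2^{t+1},w_3^{t+1})$ of $\partial F(Z^{t+1})$ by combining the subdifferential formula for $F$ with the first-order optimality conditions of \eqref{eq:w_gamma_min_prob} and \eqref{eq:xi_update}, cancel the $\beta(\gamma^{t+1}-\xi^{t+1})$ coupling in the $\xi$-block in the same way, and obtain the identical per-block bounds $(\alpha+L)$, $(\alpha+\beta+L)$, and $\alpha$ that sum to $3\alpha+2L+\beta$. The only cosmetic difference is that you make the separable-sum rule for the limiting subdifferential explicit, which the paper leaves implicit.
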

\begin{proof}
We note that
\begin{subequations}
\begin{align} \label{eq:subgrad_w_f}
\nabla_W \tilde{\mathcal{L}}(\tilde{W}^{t+1})  &\in \partial_W F(Z^{t+1}), \\\label{eq:subgrad_gamma_f}
\nabla_{\gamma}\tilde{\mathcal{L}}(\tilde{W}^{t+1}) + \beta(\gamma^{t+1} - \xi^{t+1}) &\in  \partial_{\gamma} F(Z^{t+1}),\\\label{eq:subgrad_xi_f}
\lambda \partial_{\xi}\|\xi^{t+1}\|_1 -\beta(\gamma^{t+1} - \xi^{t+1}) &\in \partial_{\xi}F(Z^{t+1}).
\end{align}
\end{subequations}
By the first-order optimality conditions of \eqref{eq:w_gamma_min_prob} and \eqref{eq:xi_update}, we obtain
\begin{subequations}
\begin{align} \label{eq:w_first_order}
    \nabla_W \mathcal{\tilde{L}}(\tilde{W}^{t}) + \alpha(W^{t+1}-W^t) &= 0,\\ \label{eq:gamma_first_order}
    \nabla_{\gamma} \mathcal{\tilde{L}}(\tilde{W}^{t}) + \alpha(\gamma^{t+1} - \gamma^t) + \beta(\gamma^{t+1} - \xi^t) &= 0, \\ \label{eq:xi_first_order}
    \lambda \partial_{\xi}\|\xi^{t+1}\|_1 +\alpha(\xi^{t+1} - \xi^t) - \beta(\gamma^{t+1} - \xi^{t+1}) &\ni 0. 
\end{align}
\end{subequations}
Combining \eqref{eq:subgrad_w_f} and \eqref{eq:w_first_order}, \eqref{eq:subgrad_gamma_f} and \eqref{eq:gamma_first_order}, and \eqref{eq:subgrad_xi_f} and \eqref{eq:xi_first_order}, we obtain
\begin{subequations}
\begin{align}
    &\nabla_{W}\tilde{\mathcal{L}}(\tilde{W}^{t+1}) -  \nabla_W \mathcal{\tilde{L}}(\tilde{W}^{t}) - \alpha (W^{t+1} - W^t) =w_1^{t+1} \in \partial_{W} F(Z^{t+1}), \\
    &\nabla_{\gamma}\tilde{\mathcal{L}}(\tilde{W}^{t+1}) -  \nabla_{\gamma} \mathcal{\tilde{L}}(\tilde{W}^{t}) - \alpha (\gamma^{t+1} - \gamma^t) - \beta(\xi^{t+1} - \xi^{t}) =w_2^{t+1} \in \partial_{\gamma} F(Z^{t+1}),\\
    &-\alpha(\xi^{t+1} - \xi^t) =w_3^{t+1}  \in \partial_{\xi}F(Z^{t+1}),
\end{align}
\end{subequations}
where  $w^{t+1}=(w_1^{t+1}, w_2^{t+1}, w_3^{t+1}) \in \partial F(Z^{t+1})$. As a result, by triangle inequality and Lipschitz continuity of $\nabla \tilde{\mathcal{L}}$, we have
\begin{align*}
    &\|w_1^{t+1} \|_2 \leq \alpha \|W^{t+1} - W^t\|_2 + \| \nabla_{W}\tilde{\mathcal{L}}(\tilde{W}^{t+1}) -  \nabla_W \mathcal{\tilde{L}}(\tilde{W}^{t})\|_2\\ &\leq \alpha \|W^{t+1} - W^t\| + L \|\tilde{W}^{t+1} - \tilde{W}^{t}\|_2 
     \leq (\alpha + L) \|Z^{t+1} - Z^t\|_2,
\end{align*}
\begin{align*}
    \|w_2^{t+1} \|_2 &\leq \alpha\|\gamma^{t+1} - \gamma^t\|_2 + \beta \|\xi^{t+1} - \xi^{t}\|_2 + \|\nabla_{\gamma}\tilde{\mathcal{L}}(\tilde{W}^{t+1}) -  \nabla_{\gamma} \mathcal{\tilde{L}}(\tilde{W}^{t})\|_2 \\
    &\leq (\alpha +L)  \|\tilde{W}^{t+1} - \tilde{W}^{t}\|_2 +  \beta \|\xi^{t+1} - \xi^{t}\|_2\leq (\alpha + \beta+ L)  \|Z^{t+1} - Z^t\|_2 , 
\end{align*}
and
\begin{align*}
    \|w_3^{t+1}\|_2 \leq \alpha \|\xi^{t+1} - \xi^t\|_2 \leq \alpha  \|Z^{t+1}- Z^t\|_2 .
\end{align*}
Therefore, for all $t \in \mathbb{N}$, we have
\begin{align*}
    \|w^{t+1}\|_2 \leq \|w_1^{t+1}\|_2+ \|w_2^{t+1}\|_2+ \|w_3^{t+1}\|_2 \leq (3\alpha + 2L +\beta) \left\| Z^{t+1} -  Z^t\right\|_2.
\end{align*}
\end{proof}

\begin{proof}[Proof of Theorem \ref{thm:global_conv}]
The result follows from Lemmas  \ref{lemma:suff_decrease}-\ref{lemma:relative_error_prop} combined with \cite[Theorem 1]{bolte2014proximal}
\end{proof}
%
%
\bibliographystyle{splncs04}
\bibliography{egbib}
%




\end{document}